\theoremstyle{definition}
\newtheorem{theorem}{Theorem} 
\newtheorem{definition}{Definition}
\newtheorem{condition}{Condition}
\definecolor{darkgreen}{rgb}{0,0.3,0}
\def\*#1{\boldsymbol{#1}} 
\newcommand{\acro}[1]{{\small\textsc{#1}}\xspace}
\newcommand{\NOX}{\acro{NOX}}
\newcommand{\AR}{\acro{AR}}
\newcommand{\MSE}{\acro{MSE}}
\newcommand{\NLL}{\acro{NLL}}
\newcommand{\SIC}{\acro{SIC}}
\newcommand{\RLD}{\acro{RLD}}
\newcommand{\SGV}{\acro{SGV}}
\newcommand{\AM}{\acro{AM}}
\newcommand{\FL}{\acro{FL}}
\newcommand{\GHz}{\acro{GHz}}
\newcommand{\iseven}{{i7}}
\newcommand{\GB}{\acro{GB}}
\newcommand{\RAM}{\acro{RAM}}  
\newcommand{\EPSRC}{\acro{EPSRC}}
\newcommand{\CP}{\acro{CP}}
\newcommand{\CPs}{\acro{CPs}}
\newcommand{\GP}{\acro{GP}}
\newcommand{\GPs}{\acro{GPs}}
\newcommand{\BOCPD}{\acro{BOCPD}}
\newcommand{\BOCPDMS}{\acro{BOCPDMS}}
\newcommand{\BVAR}{\acro{BVAR}}
\newcommand{\BVARs}{\acro{BVARs}}
\newcommand{\SSBVAR}{\acro{SSBVAR}}
\newcommand{\SSBVARs}{\acro{SSBVARs}}
\newcommand{\BAR}{\acro{BAR}}
\newcommand{\BARs}{\acro{BARs}}
\newcommand{\VAR}{\acro{VAR}}
\newcommand{\VARs}{\acro{VARs}}
\newcommand{\BFs}{\acro{BFs}}
\newcommand{\MAP}{\acro{MAP}}
\newcommand{\VARMA}{\acro{VARMA}}
\newcommand{\GARCH}{\acro{GARCH}}
\newcommand{\HMMs}{\acro{HMMs}}
\newcommand{\YWE}{\acro{YWE}}
\newcommand{\OLS}{\acro{OLS}}
\newcommand{\ARGPCP}{\acro{ARGPCP}}
\newcommand{\GPTSCP}{\acro{GPTSCP}}
\newcommand{\NSGP}{\acro{NSGP}}
\newcommand{\ML}{\acro{ML}}
\newcommand{\SRC}{\acro{SRC}}
\newcommand{\QR}{\acro{QR}}
\icmltitlerunning{Spatio-temporal Bayesian On-line Changepoint Detection with Model Selection}
\begin{document} 

\twocolumn[
\icmltitle{Spatio-temporal Bayesian On-line Changepoint Detection with Model Selection 
           }



\icmlsetsymbol{equal}{*}

\begin{icmlauthorlist}
\icmlauthor{Jeremias Knoblauch}{stats}
\icmlauthor{Theodoros Damoulas}{stats,cs,ati}
\end{icmlauthorlist}

\icmlaffiliation{stats}{Department of Statistics, University of Warwick,  UK}
\icmlaffiliation{cs}{Department of Computer Science, University of Warwick, UK}
\icmlaffiliation{ati}{The Alan Turing Institute for Data Science \& AI, UK}

\icmlcorrespondingauthor{Jeremias Knoblauch}{j.knoblauch@warwick.ac.uk}



\icmlkeywords{Bayesian Online Changepoint Detection, Spatial Statistics, Time Series, Gaussian Graphical Models, Model Selection}

\vskip 0.3in
]



\printAffiliationsAndNotice{} 

\begin{abstract} 

Bayesian On-line Changepoint Detection is extended to on-line model selection and non-stationary spatio-temporal processes. 
We propose spatially structured Vector Autoregressions (\VARs) for modelling the process between changepoints (\CPs)
and give an upper bound on the approximation error of such models. 
The resulting algorithm performs prediction, model selection and \CP detection on-line. Its time complexity is linear and its space complexity constant, and thus it is two orders of magnitudes faster than its closest competitor. 
In addition, it outperforms the state of the art for multivariate data.

\end{abstract} 

\section{Introduction}
\label{Section_introduction}

Real-world spatio-temporal processes are often poorly modelled by standard inference methods that assume stationarity in time and space.
A variety of techniques have been developed for modelling non-stationarity in time via changepoints (\CPs), ranging from methods for Gaussian Processes (\GPs)  \citep{Osborne},  the Lasso \citep{lin2017sharp} or the Ising model \citep{fazayeli2016generalized} over approaches using density ratio estimation \citep{liu2013change} and kernel-based methods exploiting M-statistics \citep{li2015m} to framing \CP detection as time series clustering \citep{khaleghi2014asymptotically}. In contrast, 
 \CP inference allowing for non-stationarity in space \cite{Flaxman} has received comparatively little attention.
\begin{figure}[t!]
\vskip 0.1in
\begin{center}
\centerline{\includegraphics[trim= {0.5cm 0.7cm 1.4cm 1.225cm}, clip, 
width=1.00\columnwidth]{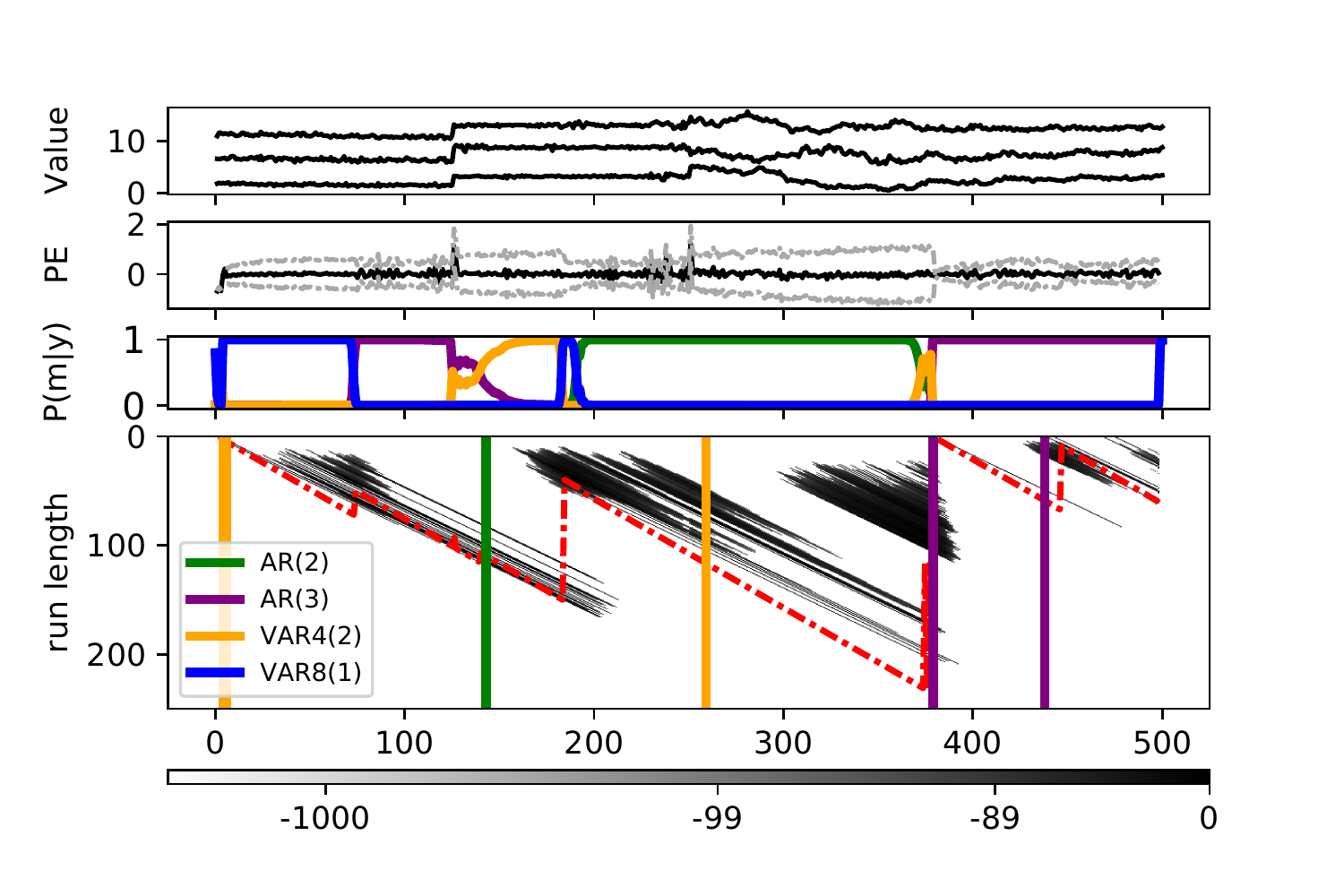}}
\caption{\textit{Bayesian On-line Changepoint Detection with Model Selection (\BOCPDMS)}: {\color{darkgray}\textbf{Panel 1:}} Artificial data across times ${1-500}$ for a regular spatial grid with $4$- and $8$-neighbourhood dependency structure as in Fig. \ref{SSBVAR_graph}, 
 {\color{darkgray}\textbf{Panel 2:}} prediction error (black) and variance (gray). {\color{darkgray}\textbf{Panel 3:}} Model posteriors $p(m_t|\*y_{1:t})$. {\color{darkgray}\textbf{Panel 4:}} log run-length distribution (grayscale), its maximum (red) and \MAP segmentation of \CPs and models in corresponding colors.}
\label{picture_demo}
\end{center}
\vskip -0.2in
\end{figure}

We offer the first on-line solution to this problem by modeling non-stationarity in both space and time. \CPs are used to model non-stationarity in time, and the use of spatially structured Bayesian Vector Autoregressions (\SSBVAR) circumvents the assumption of stationarity in space. We unify 
\citet{BOCD} and \citet{FearnheadOnlineBCD} into an inference procedure for  \textbf{on-line} \textbf{prediction}, \textbf{model selection} and  \textbf{\CP detection}, see Fig. \ref{picture_demo}. 
{
Our construction exploits that both algorithms use Product Partition Models \cite{PPM}, which assume independence of parameters conditional on the \CPs and independence of observations conditional on these parameters.
}

{
Our method can be seen as modified on-line version of \citet{MurphyMVTSBCP}.  In their method, inference is off-line, the model universe $\mathcal{M}$ is built during execution and  multivariate dependencies are 
restricted to decomposable graph. In contrast, our procedure  specifies $\mathcal{M}$ before execution, but runs on-line and does not restrict dependencies.
The closest competing on-line procedure in the literature thus far is the work of \citet{GPBOCD}, which develops   Gaussian Process (\GP) \CP models for Bayesian On-line Changepoint Detection (\BOCPD). Though our results suggest that parametric models may be preferable to \GP models, the latter can still be integrated into our method  as elements of the model universe $\mathcal{M}$ without any further modifications.
}

{
In summary, we make three contributions:
Firstly, we substantially augment the existing work on \BOCPD by allowing for model uncertainty. Unlike previous extensions of the algorithm \citep[e.g.][]{BOCD, GPBOCD}, this avoids having to guess a single best model a priori.
Secondly, we introduce \SSBVARs as the first class of models for multivariate inference within \BOCPD. 
Thirdly, we demonstrate that 
using a collection of parametric models 
can outperform nonparametric \GP models in terms of prediction, \CP detection and computational efficiency.
 }
The structure of this paper is as follows: Section \ref{Algorithm}  generalizes the \BOCPD algorithm of \citet{BOCD}, henceforth \AM, by integrating it with the approach of \citet{FearnheadOnlineBCD}, henceforth \FL. In so doing, we arrive at \BOCPD with Model Selection, henceforth \BOCPDMS. Section \ref{VAR} proposes \VAR models for non-stationary processes within the \BOCPD framework. 
This motivates populating the model universe $\mathcal{M}$ with spatially structured \BVAR (SSBVAR) models. Sections \ref{hyperpar_opt}--\ref{computational_complexity} address computational aspects.
Section \ref{results} demonstrates the algorithm's advantages on real world data.

\section{BOCPDMS}\label{Algorithm}



Let $\{\*Y_t\}_{t=1}^{\infty}$ be a data stream with an unknown number of CPs.
Focusing on univariate data, \FL 
and \AM 
tackled inference by tracking the posterior distribution for the most recent \CP. 
While \FL allow the data to be described by different models between \CPs, \AM 
only allow for a single model. However, \AM perform one-step-ahead predictions, whereas \FL 
do not. Instead, they propose a Maximum A Posteriori (\MAP) segmentation for \CPs and models. 
In the remainder of this section, we unify both inference approaches. We call the resulting algorithm \BOCPD with model selection (\BOCPDMS), as it performs prediction, \MAP segmentation and model selection on-line.

\subsection{Run-length \& model universe}\label{run-length-model-univ-subsec}

The \textit{run-length}  $r_t$ at time $t$ is 
defined as the time since the most recent \CP at time $t$, so $r_t = 0$ corresponds to a \CP at time $t$.
%
%
Suppose that data between successive \CPs can be described by Bayesian  models collected in the \textit{model universe} $\mathcal{M}$. 
%
%
For the process $\{\*Y_t\}$ on $\mathbb{R}^S$, a model $m \in \mathcal{M}$ with finite memory of length $L\in\mathbb{N}_0$ consists of an observation density $f_m(\*Y_t=\*y_t|\*\theta_m, \*y_{(t-L):(t-1)})$ on $\mathbb{R}^S$ and a parameter prior $\pi_m(\*\theta_m)$ on $\*\Theta_m$ depending on hyperparameters $\*\nu_m$. The notion of $\mathcal{M}$ is due to \FL 
and allows for model uncertainty amongst models developed for \BOCPD. For instance, $m \in \mathcal{M}$ could be a \GP \cite{GPBOCD}, a time-deterministic regression \cite{FearnheadSignalProcessing} or a mixture distribution \cite{CaronDoucet}. 

\begin{algorithm}[t!]
   \caption*{\textbf{\BOCPD with Model Selection (\BOCPDMS)}}
   \label{Algorithm_BOCPDMS}
\begin{algorithmic}
   \STATE {\bfseries Input at time $0$:} model universe $\mathcal{M}$; hazard $H$; prior $q$
   \STATE {\bfseries Input at time $t$:} next observation $\*y_t$ 
   \STATE {\bfseries Output at time} $t$: $\widehat{\*y}_{(t+1):(t+h_{\max})}$, $S_t$, $p(m_t|\*y_{1:t})$
    \\[5pt]
   \FOR{ next observation $\*y_t$ at time $t$}   \vspace*{0.15cm}
   \STATE{$//$ STEP I:  Compute model-specific quantities} \vspace*{0.05cm}
   	\FOR{ $m \in \mathcal{M}$ }
   		\IF{ $t-1 = \text{lag\_length(m)}$ }
   			\STATE [I.A] Initialize $p(\*y_{1:t}, r_t=0, m_t=m)$ with prior
   		\ELSIF{ $t-1 > \text{lag\_length(m)}$ }
   			\STATE [I.B.1] Update $p(\*y_{1:t}, r_t, m_t=m)$  via \eqref{growth_probability}, \eqref{cp_probability}
   			\STATE [I.B.2] Prune model-specific run-length distribution
   			\STATE [I.B.3] Perform hyperparameter inference via \eqref{gradient_descent_caron}
   		\ENDIF 
   	\ENDFOR \vspace*{0.15cm}
   	\STATE {$//$ STEP II:  Aggregate over models} \vspace*{0.05cm}
   	\IF{ $t >= \min(\text{lag\_length(m)})$ }
   		\STATE [II.1] Obtain joint distribution over $\mathcal{M}$ via \eqref{evidence}--\eqref{conditional_model_posterior}
   		\STATE [II.2] Compute  	
   			\eqref{posterior_predictive}--\eqref{MAP_estimator}  
   		\STATE [II.3] \textbf{Output:} $\widehat{\*y}_{(t+1):(t+h_{\max})}, S_t, p(m_t|\*y_{1:t})$  
   	\ENDIF 
   \ENDFOR
\end{algorithmic}
\end{algorithm}

\subsection{Probabilistic formulation \& recursions}

Denote by $m_t$ the model describing $\*y_{(t-r_t):t}$, i.e. the data since the last \CP. Given hazard function $H:\mathbb{N} \to [0,1]$, and  model prior  $q:\mathcal{M} \to [0,1]$, the prior beliefs are
\begin{IEEEeqnarray}{l}
\IEEEyesnumber \IEEEyessubnumber*
	p(r_t|r_{t-1})  =    \begin{cases}
										1 - H(r_{t-1} + 1)	& \text{ if } r_t=r_{t-1}+1  \\
										H(r_{t-1} + 1)  	& \text{ if } r_t=0 \\
										0					& \text{ otherwise.}
									\end{cases}
		\label{run_length_prior}\quad\quad\\ 
q(m_t|m_{t-1},r_t)  =  \begin{cases}
											\*{1}_{m_{t-1}}(m_t)	& \text{ if } r_t=r_{t-1}+1 \\
											 q(m_t)  	& \text{ if } r_t=0. \\
									\end{cases} \label{model_prior}
\end{IEEEeqnarray}
Eq. \eqref{model_prior} implies that the model at time $t$ will be equal to the model at time $t-1$ unless a \CP occured at $t$, in which case the next model $m_t$ will be a random draw from $q$. 
At time $t$, the algorithm requires for all possible models $m$ and run-lengths $r_t$ the computation of the  posterior predictives
\begin{IEEEeqnarray}{lll}
	 & & f_m(\*y_{t}| \*y_{1:(t-1)}, r_t) \nonumber \\ 
	 & = & \int_{\Theta_{m}} 
	f_m(\*y_{t}|\*\theta_{m})\pi_m(\*\theta_{m}|\*y_{(t-L-r_t):(t-1)}) d\*\theta_{m}. 
\end{IEEEeqnarray}
To make the evaluation of this integral efficient, one can use conjugate models \cite{MurphyMVTSBCP} or approximations \cite{TurnerVB, CHAMP}, which make the following recursion efficient, too:
\begin{IEEEeqnarray}{rCl}
\IEEEyesnumber
	& & p(\*y_{1:t}, r_{t}, m_{t}) =  \nonumber \\
	& & \sum_{m_{t-1}}\sum_{r_{t-1}}\Bigl\{ 
		f_{m_t}(\*y_t|\*y_{1:(t-1)},r_{t}) 
		q(m_t|\*y_{1:(t-1)},  r_t,m_{t-1}) \nonumber \\
	&& \quad\quad\quad\quad\quad	p(r_t|r_{t-1}) 
		p(\*y_{1:(t-1)}, r_{t-1},m_{t-1})\Bigr\}. \label{recursion}
\end{IEEEeqnarray}
The recursion in  \AM 
is the special case for $|\mathcal{M}| = 1$.
For $|\mathcal{M}| > 1$, $q(m_t|m_{t-1}, r_t, \*y_{1:(t-1)})$ arises as a new term, which for $\*{1}_{a}$ as the indicator function of $a$ is given by
\begin{IEEEeqnarray}{C}
									\hspace*{-0.4cm}\begin{cases}
													\*{1}_{m_{t-1}}(m_t)q(m_{t-1}|\*y_{1:(t-1)}, r_{t-1})	& \text{ if } r_t=r_{t-1}+1 \\
												 q(m_t)  	& \text{ if } r_t=0.
										\end{cases}\ \label{model_posterior_probability}
\end{IEEEeqnarray}
Next,
 define the \textit{growth-} and \textit{changepoint probabilities} as
\begin{IEEEeqnarray}{lll}
\IEEEyesnumber \IEEEyessubnumber*
 	&&  \hspace*{-0.4cm} p(\*y_{1:t}, r_{t}=r_{t-1}+1, m_{t})  =  \nonumber \\
 	&  & 	f_{m_t}(\*y_t| \*y_{1:(t-1)},r_t) 
 			p(\*y_{1:(t-1)}, r_{t-1}, m_{t-1}) \times\label{growth_probability}  \\
 	&&	(1-H(r_t))	q(m_{t-1}|\*y_{1:(t-1)}, r_t),\nonumber \\
 	&& \hspace*{-0.4cm} p(\*y_{1:t}, r_{t}=0, m_{t})  =\nonumber \\  
 	& & f_{m_t}(\*y_t| \*y_{1:(t-1)}, r_{t})q(m_t) \times \label{cp_probability} \\
 	& & 		\sum_{m_{t-1}}\sum_{r_{t-1}}\Bigl\{ 
 	 		H(r_{t-1} + 1) p(\*y_{1:(t-1)}, r_{t-1}, m_{t-1}) \nonumber
 			\Bigr\}.
\end{IEEEeqnarray}
The evidence can then be calculated via Eq. \eqref{evidence}, which in turn allows calculating the joint model-and-run-length distribution  \eqref{model_and_run_length}, the model posterior \eqref{model_posterior}, as well as the model-specific \eqref{model_specific_run_length}  and global \eqref{global_run_length} run-length distributions:
\begin{IEEEeqnarray}{rCl}
 \IEEEyesnumber \IEEEyessubnumber*
	p(\*y_{1:t}) & = & \textstyle \sum_{m_t}\sum_{r_t} p(\*y_{1:t}, m_t, r_t) \label{evidence} \\
	p(r_t,m_t| \*y_{1:t}) & = & p(\*y_{1:t}, r_t, m_t)/p(\*y_{1:t}) \label{model_and_run_length} \\
	p(m_t|\*y_{1:t}) & = & \textstyle\sum_{r_t}p(r_t,m_t| \*y_{1:t}) \label{model_posterior} \\
	p(r_t|m_t, \*y_{1:t}) & = & p(r_t, m_t|\*y_{1:t})/p(m_t|\*y_{1:t}) \label{model_specific_run_length} \\
	p(r_t|\*y_{1:t}) & = &\textstyle\sum_{m_t}p(r_t,m_t| \*y_{1:t})  \label{global_run_length} 	\\
\hspace*{-0.5cm}q(m_{t-1}|\*y_{1:(t-1)}, r_{t-1}) & = &\frac{p(m_{t-1}, r_{t-1}|\*y_{1:(t-1)})}{p(r_{t-1}| \*y_{1:(t-1)})}. \label{conditional_model_posterior}
\end{IEEEeqnarray}
Eq. \eqref{conditional_model_posterior} is the conditional model posterior from 
 Eq. \eqref{model_posterior_probability}.
%
%
 Eq. \eqref{global_run_length} is arrived at directly in \FL and used for on-line \MAP segmentation. By framing our derivations in the run-length framework of \AM, we additionally obtain  \eqref{model_posterior_probability}--\eqref{model_specific_run_length}, thus enabling 
   on-line prediction and model selection 
 at the same computational cost.

\subsection{On-line algorithm outputs}

\textbf{Prediction:} 
 Recursive $h$-step-ahead forecasting uses  \eqref{model_and_run_length}:
\begin{IEEEeqnarray}{lll}
&&p(\*Y_{t+h}|\*y_{1:t}) \nonumber \\ 
& = &	\sum_{r_t,m_t}\Bigl\{ p(\*Y_{t+h}|\*y_{1:t},\widehat{\*y}_t^h, r_t, m_t) 
 	 p(r_t,m_t| \*y_{1:t}) \Bigr\}, \quad \label{posterior_predictive}
\end{IEEEeqnarray}
where $\widehat{\*y}_t^h = \emptyset$ if $h=1$ and $\widehat{\*y}_t^h = \widehat{\*y}_{(t+1):(t+h-1)}$ otherwise, with $\widehat{\*y}_{t+h} = \mathbb{E}(\*Y_{t+h}|\*y_{1:t},\widehat{\*y}_t^h)$ the recursive forecast.

\textbf{Tracking the model posterior/Bayes Factors:}
One of the novel capabilites of the algorithm is on-line
monitoring  of the model posterior via Eq. \eqref{model_posterior}. This is attractive when structural changes in the data happen slowly and are not captured well by \CPs. 
In this case, 
$\mathbb{P}(m_t|\*y_{1:t})$  can be used to identify periods of change, see Fig. \ref{Temperatures}.
For pairwise comparisons, Bayes Factors can be monitored, too:
\begin{IEEEeqnarray}{rCl}
	\text{BF($m_1$, $m_2$)}_t & = & 
	\dfrac{p(m_t = m_1 |\*y_{1:t})\cdot q(m_2)}{p(m_t = m_2 |\*y_{1:t})\cdot q(m_1)}.\label{BayesFactors}
\end{IEEEeqnarray}

\textbf{Maximum A Posteriori (\MAP) segmentation:} 
For  $\text{MAP}_t$  the density of the \MAP-estimate of models and \CPs before $t$ and $\text{MAP}_0 = 1$, \FL{}'s recursive estimator is given by
\begin{IEEEeqnarray}{rCl}
	\text{MAP}_t = \max_{r,m}\Bigl\{ p(\*y_{1:t}, r_t=r, m_t=m)  \text{MAP}_{t-r-1} \Bigr\}. \quad \; \;\;\;\label{MAP_estimator}
\end{IEEEeqnarray}
For $r^{\ast}_t, m^{\ast}_t$ maximizers for time $t$, the \MAP segmentation is $S_t = S_{t-r^{\ast}_t-1} \cup \{(t-r^{\ast}_t, m^{\ast}_t) \}$, $S_0 = \emptyset$, where $(t',m_{t'}) \in S_t$ means a \CP at $t'\leq t$, with $m_{t'} \in \mathcal{M}$ the model for $\*y_{t':t}$. 

\section{Building a spatio-temporal model universe}\label{VAR}

%

The last section derived \BOCPDMS for arbitrary data streams $\{\*Y_t\}$. 
Next, we propose models for $\mathcal{M}$ if $\{\*Y_t\}$ can be mapped into a space $\mathbb{S}$. 
Let $\mathcal{S}$ with $|\mathcal{S}| = S$ be a set of spatial locations in $\mathbb{S}$ with measurements $\*Y_t = (Y_{t,1}, Y_{t,2}, \dots, Y_{t, S})^T$ recorded 
at times $t=1,2,\dots$


\subsection{Bayesian \VAR (\BVAR)}

Inference on $\{\*Y_t\}$ can be drawn using conjugate Bayesian Vector Autoregressions (\BVAR) with lag length $L$ and $E$ additional 
variables $\*Z_t$ as elements of model universe $\mathcal{M}$:
\begin{IEEEeqnarray}{rCl}
	\sigma^2 & \sim & \text{InverseGamma}(a,b)\IEEEyesnumber \IEEEyessubnumber* \label{BVAR_eq_1}\\
	\*\varepsilon_t|\sigma^2 & {\sim} & \mathcal{N}(\*0, \sigma^2 \cdot \*\Omega) \label{BVAR_eq_2}\\
	\*c|\sigma^2 & \sim & \mathcal{N}(\*0, \sigma^2 \cdot \*V_c)\label{BVAR_eq_3} \\
	\*Y_t & = & \*\alpha + \*B \*Z_t + \textstyle \sum_{l = 1}^L \*A_l \*Y_{t-l} + \*\varepsilon_t. \label{BVAR_eq_4}
\end{IEEEeqnarray}
Here, $\*A_l, \*B$ are $S\times S$, $S \times E$ matrices, $\*c = (\*\alpha, \text{vec}(\*B), \text{vec}(\*A_1),  \text{vec}(\*A_2), \dots  \text{vec}(\*A_L))^T$ is a vector of $S \cdot (LS + 1 + E)$  model parameters. Scalars $a,b>0$,  matrix $\*V_c$, and diagonal matrix $\*\Omega$ are hyperparameters. 

\subsection{Approximating processes using \VARs}\label{section_approx_var}

Modelling $\{\*Y_t\}$ as \VAR is attractive, as many complex non-linear processes have \VAR representations, including \HMMs, time-stationary \GPs as well as multivariate \GARCH and fractionally integrated \VARMA processes \cite{BaxterIE2,VARFARIMA}. Performance guarantees for \VAR approximations to such processes are derived using  Baxter's Inequalitiy with multivariate versions of results in \citet{HannanKavalieris}.
%


\begin{theorem}\label{VAR_Thm}
Let $\{\*Y_t\}$ be a time-stationary spatio-temporal process with spectral density satisfying regularity condition \textbf{A} in the Appendix, $||\cdot ||$ a matrix norm, $\mathbb{E}(\*Y_t) = 0$, $\mathbb{E}(\*Y_t\*Y_t^T)<\infty$, $\sum_{h=-\infty}^{\infty}(1 + |h|)^3 ||\mathbb{E}[\*Y_{t}\*Y_{t+h}']|| < \infty$. Then (1)--(3) hold. \\[-18pt]
\begin{itemize}
\item[(1)]$\*Y_t = \sum_{i=1}^{\infty}\*A_i\*Y_{t-i} + \*\varepsilon_t$ for matrices $\{\*A_l\}_{l\in\mathbb{N}}$ and  $\mathbb{E}(\varepsilon_t)= 0$,  $\mathbb{E}(\*\varepsilon_t\*\varepsilon_t')= \*D$, $\*D$ diagonal.\\[-18pt]
\item[(2)] For $\*Y_t = \sum_{l=1}^{L}{\*A}^L_l\*Y_{t-l} + e_t$  with $\{{\*A}^L_l\}_{l=1}^L$ the best linear projection coefficients, $\exists L_0:\forall L>L_0$, $\sum_{l=1}^L(1+|l|)^3||\*A^L_l - \*A_l|| \leq C \cdot \sum_{l=L+1}^{\infty}(1+|l|)^3||\*A_l||$ with $C$ constant.\\[-18pt]
\item[(3)] Using $T$ observations with $L = \mathcal{O}([T/\ln(T)]^{1/6})$ to estimate ${\*A}^L_l$ as \MAP  $\widehat{\*A}^L_l$ of \eqref{BVAR_eq_1}--\eqref{BVAR_eq_4},  it holds that $L(T)^2\sum_{l=1}^{L(T)}||\widehat{\*A}^{L(T)}_l - {\*A}^{L(T)}_l|| \overset{P}{\to} 0$ as $T \to \infty$.\\[-18pt]
\end{itemize}
\end{theorem}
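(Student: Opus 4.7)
The plan follows the classical strategy for Baxter-type inequalities, lifted to the multivariate spatio-temporal setting, and then combined with Hannan--Kavalieris-style estimation theory. For (1), I would invoke the spectral factorization of $\{\*Y_t\}$: condition \textbf{A} (which I anticipate to give boundedness and invertibility of the spectral density matrix) guarantees a Wold representation $\*Y_t = \sum_{i \geq 0} \*C_i \*\eta_{t-i}$ with square-summable $\*C_i$ and orthogonal innovations $\*\eta_t$. Invertibility of $\sum_i \*C_i z^i$ on the unit disk then yields the one-sided AR representation $\*Y_t = \sum_{i \geq 1} \*A_i \*Y_{t-i} + \*\varepsilon_t$. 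To obtain the diagonal $\*D$, I would orthogonalize the innovations via a Cholesky factor of $\mathbb{E}(\*\varepsilon_t \*\varepsilon_t')$ and absorb the factor into the $\*A_i$.

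For (2), the idea is to write the population normal (projection) equations satisfied by both the truncated coefficients $\{\*A_l^L\}_{l=1}^{L}$ and the infinite coefficients $\{\*A_l\}_{l \in \mathbb{N}}$. Subtracting the two systems expresses $\*A_l^L - \*A_l$ as the action of an inverse block-Toeplitz autocovariance operator on the tail $\{\*A_l\}_{l > L}$. Condition \textbf{A} implies that this operator is invertible on the weighted Wiener algebra with weight $(1+|l|)^3$, and the summability hypothesis on $\mathbb{E}[\*Y_t\*Y_{t+h}']$ places the autocovariances in that algebra. A weighted Banach-algebra argument, i.e.\ the multivariate Baxter inequality of \citet{BaxterIE2}, then yields the required bound with a constant $C$ depending only on the spectral density bounds.

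For (3), I would combine (2) with a multivariate version of the argument in \citet{HannanKavalieris}. The MAP estimator of \eqref{BVAR_eq_1}--\eqref{BVAR_eq_4} under the conjugate normal--inverse-gamma prior reduces to a ridge-type penalisation of the OLS normal equations. The error $\widehat{\*A}_l^{L(T)} - \*A_l^{L(T)}$ would then be split into (a) a prior-induced bias, which is $O(1/T)$ uniformly in $l$ since $\*V_c$ is fixed while the Fisher information grows linearly in $T$, and (b) a stochastic term controlled by concentration bounds for the sample block-Toeplitz autocovariance, where the relevant moment conditions follow from the third-order summability in the hypotheses. Choosing $L(T) = \mathcal{O}([T/\ln(T)]^{1/6})$ makes $L(T)^2 \sum_{l \leq L(T)} ||\widehat{\*A}_l^{L(T)} - \*A_l^{L(T)}|| \to 0$ in probability, exactly as in the univariate scalar case.

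The main obstacle will be (2): verifying that the weighted Wiener algebra with cubic weight $(1+|l|)^3$ is indeed invertible under condition \textbf{A}, and tracking the constant $C$ so that it remains uniform in the truncation level $L(T)$ used in (3). The lift of Hannan--Kavalieris from the scalar Yule--Walker setting to a multivariate MAP estimator with growing dimension is conceptually straightforward but requires careful bookkeeping of the prior's shrinkage relative to the growing number of parameters $S \cdot (L(T) S + 1 + E)$, ensuring that the prior contribution does not dominate the bound once $L(T)$ grows.
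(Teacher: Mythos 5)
Your proposal is correct and follows essentially the same route as the paper: parts (1) and (2) are exactly the multivariate Wold/spectral-factorization and weighted Baxter-inequality results that the paper simply cites from the VAR-sieve literature rather than re-deriving, and for part (3) the paper makes the same move you do, namely reducing the conjugate MAP estimator to a ridge-perturbed Yule--Walker/OLS estimator whose prior contribution vanishes after normalising by $T$, and then invoking the Hannan--Kavalieris-type rate. If anything, your attention to whether the prior-induced $O(1/T)$ term survives multiplication by $L(T)^2$ and summation over $L(T)$ coefficients is more careful than the paper's argument, which only exhibits convergence of the MAP estimator to the OLS limit rather than explicitly checking the rate.
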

\begin{proof}\renewcommand{\qedsymbol}{}
 Part (1) is shown in \citet{VARFARIMA}, part (2) in Lemma 3.1 of \citet{VARSieve}. Part (3) follows by their {Remark 3.3} if we can prove that the \MAP estimator $\hat{\*c}(L(T))$ of $\*c$ 
equals its Yule-Walker estimator (\YWE) as $T \to \infty$. Let $\*B=\*0$, $\*\alpha =\*0$ and note that \YWE equals \OLS as $T \to \infty$. With $\*X_{1:T}$ the regressor matrix of $\*Y_{t-L(T):t}$, $\hat{\*c}(L(T)) = (\*X_{1:T}'\*X_{1:T} + \*V_c^{-1})^{-1}(\*X_{1:T}'\*Y_{1:T})$. 
Then, part (3) holds as 
 \OLS  $\overset{P}{\to}\mathbb{E}(\*X_{1:T}'\*X_{1:T})^{-1}\mathbb{E}(\*X_{1:T}'\*Y_{1:T})$ and 
 \begin{IEEEeqnarray}{rCl}
 \hat{\*c}(L(T))	& =  & (\*X_{1:T}'\*X_{1:T} + \*V_c^{-1})^{-1}(\*X_{1:T}'\*Y_{1:T}) \nonumber \\
 	& = & ( \frac{1}{T} \*X_{1:T}'\*X_{1:T} + \frac{1}{T} \*V_c^{-1})^{-1} \frac{1}{T}(\*X_{1:T}'\*Y_{1:T}) \nonumber \\
 	& \overset{P}{\to} & \mathbb{E}(\*X_{1:T}'\*X_{1:T})^{-1}\mathbb{E}(\*X_{1:T}'\*Y_{1:T}). \rlap{$\qquad\qquad\: \Box$} \nonumber \\[-35pt] \nonumber
 \end{IEEEeqnarray}
\end{proof}
In Thm. \ref{VAR_Thm}, assuming $\mathbb{E}(\*Y_t) = \*0$  is without loss of generality: If $\mathbb{E}(\*Y_t) = \*\alpha + \*B\*Z_t$, define $\*Y_t^{\ast} = \*Y_t - (\*\alpha + \*B\*Z_t)$ and apply the theorem to $\{\*Y_t^{\ast}\}$. 
Moreover, the results do \textit{not} require stationarity in space. 
Lastly, part {(3)} suggests a principled way of picking lag lengths $\mathcal{L}$ for \BVAR models based on functions $L(T) = C\cdot(T/\ln(T))^{1/6}$, with $C$ a constant: If between $T_1$ and $T_2$ observations are expected between \CPs, $\mathcal{L} = \{L \in \mathbb{N}: L(T_1) \leq L\leq L(T_2) \}$. In our experiments, we employ this strategy using $T_1 = 1, T_2 = T$. 

\subsection{Modeling spatial dependence}\label{nbh_sequences_section}

While Thm. \ref{VAR_Thm} motivates approximating spatio-temporal processes between \CPs with \eqref{BVAR_eq_1}--\eqref{BVAR_eq_4}, the matrices $\{{\*A}^L_l\}_{l=1}^L$ have $S (LS + 1 + E)$ parameters.
This increases model complexity and ignores spatial information. 
We remedy both issues through neighbourhood systems on $\mathcal{S}$. 
\begin{definition}[\textbf{Neighbourhood system}]
 For a set of locations $\mathcal{S}$ 
 with the sets $N_i(s) \subseteq \mathcal{S}$  as the {\normalfont$i$-th neighbourhoods of $s$} for  $0 \leq i \leq n$ and all $s \in \mathcal{S}$, let $N_i(s) \cap N_j(s) = \emptyset$, $s' \in N_i(s) \Longleftrightarrow s \in N_i(s')$ and $N_0(s) = \{ s \}$. Then, the corresponding {\normalfont neighbourhood system} is ${N}(\mathcal{S}) = \left\{\{N_i(s)\}_{i=1}^{n}: s \in \mathcal{S}, 0 \leq i \leq n\right\}$.
\end{definition}
In the remainder of the paper, smaller indices $i$ imply that the neighbourhoods $N_i(s)$ are closer to $s$.
For a \BVAR model of lag length $L$, the decay of spatial dependence is encapsulated through $\Pi:\{1,\dots,L\} \to \{0,\dots, n\}$. In particular, 
only  $s' \in N_i(s)$ with $i\leq \Pi(l)$ are modeled as affecting $s$ after $l$ time periods. 

 
	\begin{figure}[h!]
	\begin{tikzpicture}	
		\tikzstyle{darkstyle} = [circle,draw,fill=gray!40,minimum size=10]
		\tikzstyle{nbh_one} =[circle,draw,fill=red!40,minimum size=10]
		\tikzstyle{nbh_two} =[circle,draw,fill=blue!40,minimum size=10]
		\tikzstyle{AR} = [circle, draw, fill=orange!40, minimum size = 10]
		\tikzstyle{whitestyle} = [circle, draw, fill=white!40, minimum size = 10]	
		\node [text width = 1cm] (tmtwo) at (0.85, 2.5) {$t-2$};
		\node [darkstyle]  (1) at (0,1.5) {1\label{1}};
		\node [nbh_one]  (2) at (0.75,1.5) {2\label{2}};
		\node [darkstyle]  (3) at (1.5,1.5) {3\label{3}};
		\node [nbh_one]  (4) at (0,0.75) {4\label{4}};
		\node [AR]  (5) at (0.75,0.75) {5\label{5}};
		\node [nbh_one]  (6) at (1.5,0.75) {6\label{6}};
		\node [darkstyle]  (7) at (0,0) {7\label{7}};
		\node [nbh_one]  (8) at (0.75,0) {8\label{8}};
		\node [darkstyle]  (9) at  (1.5,0) {9\label{9}};
		
		\node [text width = 1cm] (tmone) at (3.85, 2.5) {$t-1$};
		\node [nbh_two]  (1_) at (3,1.5) {1\label{1_}};
		\node [nbh_one]  (2_) at (3.75,1.5) {2\label{2_}};
		\node [nbh_two]  (3_) at (4.5,1.5) {3\label{3_}};
		\node [nbh_one]  (4_) at (3,0.75) {4\label{4_}};
		\node [AR]  (5_) at (3.75,0.75) {5\label{5_}};
		\node [nbh_one]  (6_) at (4.5,0.75) {6\label{6_}};
		\node [nbh_two]  (7_) at (3,0) {7\label{7_}};
		\node [nbh_one]  (8_) at (3.75,0) {8\label{8_}};
		\node [nbh_two]  (9_) at  (4.5,0) {9\label{9_}};
	
		\node [text width = 1cm] (t) at (7.2, 2.5) {$t$};
		\node [darkstyle]  (1__) at (6,1.5) {1\label{1__}};
		\node [darkstyle]  (2__) at (6.75,1.5) {2\label{2__}};
		\node [darkstyle]  (3__) at (7.5,1.5) {3\label{3__}};
		\node [darkstyle]  (4__) at (6,0.75) {4\label{4__}};
		\node [whitestyle]  (5__) at (6.75,0.75) {5\label{5__}};
		\node [darkstyle]  (6__) at (7.5,0.75) {6\label{6__}};
		\node [darkstyle]  (7__) at (6,0) {7\label{7__}};
		\node [darkstyle]  (8__) at (6.75,0) {8\label{8__}};
		\node [darkstyle]  (9__) at  (7.5,0) {9\label{9__}};
		
		\path (2) edge [red!40, line width = 1.0] (4);
		\path (2) edge [red!40, line width = 1.0] (6);
		\path (4) edge [red!40, line width = 1.0] (8);
		\path (6) edge [red!40, line width = 1.0] (8);
		\path (2_) edge [red!40, line width = 1.0] (4_);
		\path (2_) edge [red!40, line width = 1.0] (6_);
		\path (4_) edge [red!40, line width = 1.0] (8_);
		\path (6_) edge [red!40, line width = 1.0] (8_);
		\path (1_) edge [blue!40, out = 50, in = 130,line width = 1.0] (3_);
		\path (1_) edge [blue!40, out = 225, in = 140, line width = 1.0] (7_);
		\path (3_) edge [blue!40,in = 40, out = -40 , line width = 1.0] (9_);
		\path (7_) edge [blue!40, out = -50, in = -130, line width = 1.0] (9_);
	
	 	\path[->] (2) edge  [out = 10, in =125, red!40, line width = 2.0]  (5__);
	 	\path[->] (2_) edge  [out = 20, in =125, red!40, line width = 2.0]  (5__);
	 	\path[->] (5) edge  [out = -25, in =225, orange!40, line width = 2.0]  (5__);
	 	\path[->] (5_) edge  [out = -50, in =225, orange!40, line width = 2.0]  (5__);
	 	\path[->] (3_) edge  [out = -40, in =180, blue!40, line width = 2.0]  (5__);
	\end{tikzpicture}
	\vskip -0.2in
\caption{\textit{SSBVAR modeling:} Suppose that on a regular grid of size $9$, 
 $Y_{t,5}$  depends on the past two realizations of itself and its $4$- neighbourhood, and the last realization of its $8$-neighbourhood. This is an \SSBVAR on $\mathcal{S}  = \{1, \dots, 9\}$ with $L=2$, {\color{orange}$N_0(5) =\{ 5 \}$}, {\color{red}$N_1(5) = \{2,4,6,8\}$}, {\color{blue}$N_2(5) = \{1,3,7,9\}$} and function $\Pi$ with $\Pi(1) = 2, \Pi(2) = 1$. }
\label{SSBVAR_graph}
\end{figure}
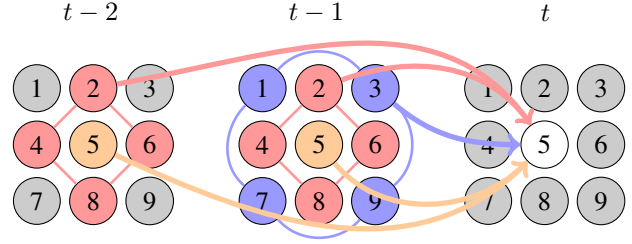

\subsection{Spatializing \BVAR}

In principle, given $N(\mathcal{S})$,  sparsification of the \BVAR model \eqref{BVAR_eq_1}--\eqref{BVAR_eq_4} is possible in two ways: As restriction on the \textit{contemporaneous} dependence via the covariance matrix of the error term $\*\varepsilon_t$, or as restriction on the \textit{conditional} dependence via the coefficient matrices $\{\*A_l\}_{l=1}^L$. We choose the latter for three reasons: 
Firstly, linear effects have more interesting interpretations than error covariances. 
Secondly, using $\{\*A_l\}_{l=1}^L$ to encode spatial dependency allows us to work with arbitrary neighbourhoods. In contrast, modelling dependent errors under conjugacy limits dependencies to decomposable graphs \cite{MurphyMVTSBCP}. Since not even a regular grid is decomposable, this is problematic for spatial data. 
Thirdly, modelling errors as contemporaneous is attractive for low-frequency data where the resolution of temporal effects is coarse, but the situation reverses for high-frequency data. Since the algorithm runs on-line, we expect $\{\*Y_t\}$ to be observed with high frequency.
%
\begin{definition}[\textbf{Spatially structured \BVAR (\SSBVAR)}]
 For process $\{\*Y_t\}$ on $\mathcal{S}$ and $(L, N(\mathcal{S}), \Pi(\cdot))$, define the matrices $\{\widetilde{\*A}_l\}_{l=1}^L$ by imposing that $[\widetilde{\*A}_l]_{(s, s')} = 0 \Longleftrightarrow s' \notin N_i(s)$ for any $i \leq \Pi(l)$. Let $\widetilde{\*A}_l^{\neq 0}$ be the vector of non-zero entries in $\widetilde{\*A}_l$ and $\widetilde{\*c} = (\*\alpha, \text{vec}(\*B),\widetilde{\*A}_1^{\neq 0},  \widetilde{\*A}_2^{\neq 0}, \dots  \widetilde{\*A}_L^{\neq 0})^T$. The \SSBVAR model on $\{\*Y_t\}$ induced by $(L, N(\mathcal{S}), \Pi(\cdot))$ is obtained by combining \eqref{BVAR_eq_1}--\eqref{BVAR_eq_2} with
 \begin{IEEEeqnarray}{rCl}
  \IEEEyessubnumber*
 	\widetilde{\*c}|\sigma^2 & \sim & \mathcal{N}(\*0, \sigma^2 \cdot \*V_{\widetilde{c}})\label{BVAR_eq_3_prime} \\
 	\*Y_t & = & \textstyle \*\alpha + \*B \*Z_t + \sum_{l = 1}^L \widetilde{\*A}_l \*Y_{t-l} + \*\varepsilon_t. \label{BVAR_eq_4_prime} \\[-15pt] \nonumber
 \end{IEEEeqnarray}
\end{definition}



Fig. \ref{SSBVAR_graph} illustrates this idea. Further sparsification is possible by modelling neighbourhoods jointly, i.e. $[\widetilde{\*A}_l]_{(s, s')} = a_i(s), \forall s' \in N_i(s)$, reducing the number of parameters to $S \cdot \sum_{l=1}^L\Pi(l)$. If one imposes $a_i(s) = a_i(s') = \dots = a_i$, this number drops to $\sum_{l=1}^L\Pi(l)$.

\subsection{Building \SSBVARs: choosing $L, N(\mathcal{S}), \Pi(\cdot)$}

%
For the choice of lag lengths $L$, part \textit{(3)} of Thm. \ref{VAR_Thm} suggests  $L \in \{L' \in \mathbb{N}: L(T_1) \leq L'\leq L(T_2) \}$ if one expects $T_1$ to $T_2$ observations between \CPs. 
%
%
%
For any data stream $\{\*Y_t\}$ on a space $\mathbb{S}$, there are different ways of constructing neighbourhood structures $N(\mathcal{S})$.  For example, when analysing pollutants  in London's air  in section \ref{results}, $N(\mathcal{S})$ could be constructed from Euclidean or Road distances.
By filling $\mathcal{M}$ with \SSBVARs constructed  using competing versions of $N(\mathcal{S})$, \BOCPDMS provides a way of dealing with such uncertainty about spatial relations. In fact, it can dynamically discern changing spatial relationships on $\mathbb{S}$.
Lastly, $\Pi(\cdot)$ should usually be decreasing to reflect that measurements affect each other less when further apart.

\section{Hyperparameter optimization}\label{hyperpar_opt}

Hyperparameter inference  on $\*\nu_m$ can be addressed either by 
introducing an additional hierarchical layer \cite{HazardLearningBOCD} or using type-II \ML. The latter is obtained by maximizing the model-specific evidence 
\begin{IEEEeqnarray}{rCl}\label{hyperpar_opt_Turner}
	\log p(\*y_{1:T}|\*\nu_m) 
	& = & 
	\sum_{t=1}^T \log p(\*y_t|\*\nu_m, \*y_{1:(t-1)}). 
\end{IEEEeqnarray}
Computation of the righthand side requires evaluating the gradients $\nabla_{\*\nu_m} p(\*y_{1:t}, r_t| \*\nu_m)$, which are obtained efficiently and recursively 
\citep{ABOCD}. 
\citet{GPBOCD} use  $\*y_{1:T'}$ as a test set, and 
run \BOCPD $K$ times to find $\widehat{\*\nu}_m = \arg\max_{\*\nu_m}\left\{ p(\*y_{1:T'}|\*\nu_m) \right\}$. 
{
Most other on-line \GP approaches also
 require substantial recomputations for hyperparameter learning
\citep[e.g.,][]{OnlineGP2}.
}
In contrast, \citet{CaronDoucet} propose on-line gradient descent updates via
\begin{IEEEeqnarray}{rCl}
	\*\nu_{m,t+1} 
	& = & 
	\*\nu_{m,t} + \alpha_t  \nabla_{\*\nu_{m,t}}\log p(\*y_{t+1}| \*y_{1:t}, \*\nu_{m_{1:t}}).  \quad\; \label{gradient_descent_caron}
\end{IEEEeqnarray}
The latter  is preferable 
 for two reasons:
Firstly, inference and type-II \ML are executed simultaneously (rather than sequentially) and thus enable cold-starts of \BOCPDMS. 
Secondly, neither the on-line nature nor the computational complexity of \BOCPDMS is affected.


\section{Computation \& Complexity}\label{computational_complexity}



While tracking $|\mathcal{M}|$ models, \BOCPDMS has linear time complexity. 
Step 1 in the pseudocode  is the bottleneck, but looping over $\mathcal{M}$ can be parallelized: With $N$ threads, it executes in $\mathcal{O}\left(\lceil |\mathcal{M}|/N \rceil \cdot \max_{M \in \mathcal{M}} \text{CmpTime}(M)\right)$. Step 2 takes $\mathcal{O}(|R(t)||\mathcal{M}|)$, for $R(t)$ all run-lengths at time $t$.

\subsection{Pruning the run-length distribution}\label{RLD}
In a naive implementation, all run-lengths are retained and $R(t) = \{1,2,\dots,t\}$.  
 This implies execution time of order $\mathcal{O}(t)$ for processing $\*y_t$, but can be made time-constant by pruning: 
{
If one discards run-lengths whose posterior probability is $\leq 1/R_{\max}$ or only keeps the $R_{\max}$ most probable ones, $|R(t)| \leq R_{\max}$ \cite{BOCD}. A third way is {Stratified Rejection Control (\SRC)} \cite{FearnheadOnlineBCD}, which \citet{CaronDoucet} and the current paper found to perform as well as the other approaches. In our experiments, we prune by keeping the  $R_{\max}$ most probable model-specific run-lengths $p(r_t|m_t,\*y_{1:t})$ for each model. 
}

\subsection{\BVAR updates}

The bottleneck when updating a \BVAR model 
%
%
in $\mathcal{M}$ is step I.B.1 in the pseudocode of \BOCPDMS, when updating the \MAP estimate $\*c(r,t) = \*F(r,t)\*W(r,t)$ of the coefficient vector , where $\*F(r,t) = (\*X_{(t-r):t}'\*X_{(t-r):t} + \*V_{\widetilde{c}})^{-1}$ and $\*W(r,t) = \*X_{(t-r):t}'\*Y_{(t-r):t}$ for all $r \in R(t)$. 
Since $\*W(r,t) = \*W(r-1,t-1) + \*X_t'\*Y_t$, updates are $\mathcal{O}(kS)$. $\*F(r-1,t-1)$ can be updated to $\*F(r,t)$ using rank-$k$ updates to its \QR-decomposition in $\mathcal{O}( k^3)$ or using Woodbury's formula, in $\mathcal{O}(S^3)$, implying an overall complexity of $\mathcal{O}(|R(t)| \min\{k^3, S^3\})$ at time $t$.


\subsection{Comparison with \GP-based approaches}
Define 
$k_{\max}$ as the largest number of regressors of any \BVAR model in $\mathcal{M}$. 
From the previous paragraphs, it follows that if all models in $\mathcal{M}$ are \BVARs, the overhead $C =\lceil N/|\mathcal{M}| \rceil \cdot \min\{k_{\max}^3, S^3\}$ is time-constant. Thus, 
\BOCPDMS runs in $\mathcal{O}(T R_{\max})$ on $T$ observations. In contrast, the models of \citet{GPBOCD} run in $\mathcal{O}(T R_{\max}^3)$. 
{
The experiments in section \ref{results} confirm this:
Using the software of \citet{TurnerThesis} on the Nile data, fitting one \ARGPCP model  takes $42$ seconds compared to $12$ seconds when fitting three models in \BOCPDMS, so a \BVAR model is $>10\times$ faster to process. Per inferred parameter, \BOCPDMS is $>60\times$ faster than \ARGPCP; and this factor is much larger for multivariate data (e.g., $>270$ for the $30$ Portfolio data). More details on the run-times can be found in the Appendix.
}



\begin{figure*}[t!]
\begin{center}
\centerline{\includegraphics[trim={4.6cm 1cm 5.0cm 0.3cm},clip, width=1.00\textwidth]{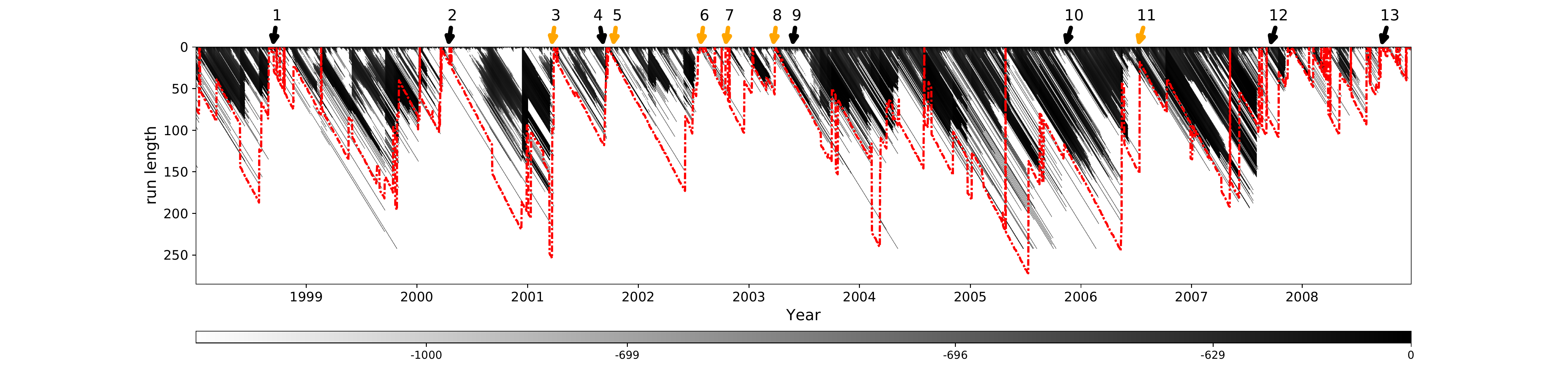}}
\caption{\textit{Results for $30$ Portfolio data set, displayed from 01/01/1998--31/12/2008}: Log run-length distribution  (grayscale) and its {\color{red}maximum} (dashed). Changepoints (\CPs) found by \citet{GPBOCD} are marked in \textbf{black}, additional \CPs found by \BOCPDMS in {\color{orange}orange}. Labels correspond to: 
(1) Asia Crisis, 
(2) DotCom bubble bursting, 
(3) OPEC cuts output by 4\%,
(4) 9/11, 
(5) Afghanistan war, 
(6) 2002 stock market crash, 
(7) Bombing attack in Bali,
(8) Iraq war, 
(9) Major tax cuts under Bush, 
(10) US election, 
(11) Iran announces successful enrichment of Uranium,
(12) Northern Rock bank run, 
(13) Lehman Brothers collapse.}
\label{30PF_comparison}
\end{center}
\vskip -0.2in
\end{figure*}
%
\begin{figure*}[]
\begin{center}
\centerline{\includegraphics[trim={4.6cm 1.025cm 5cm 3.0cm},clip, width=1.00\textwidth]{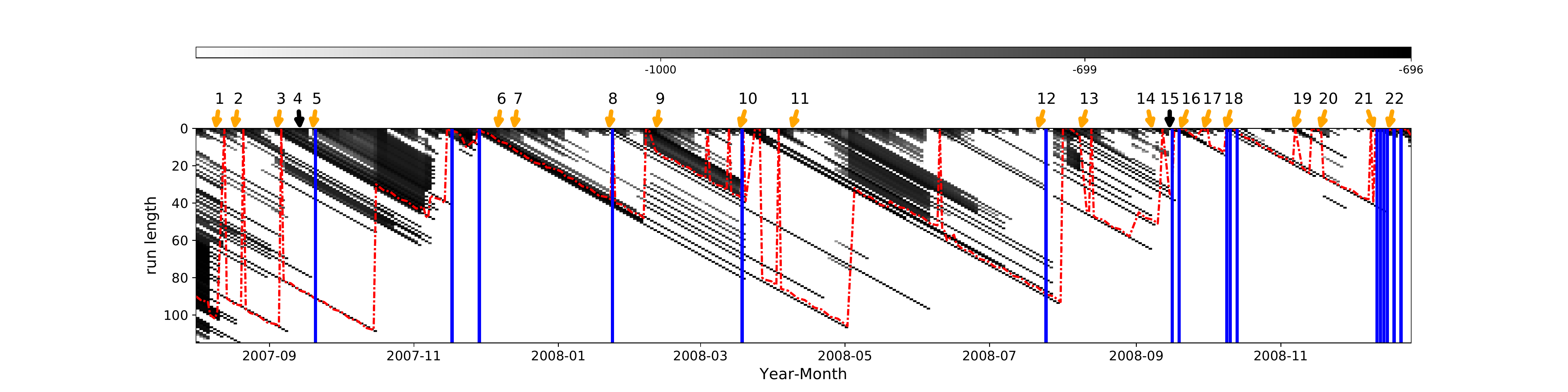}}
\caption{\textit{Financial crisis 01/08/2007--31/12/2008}: Colours as in Fig \ref{30PF_comparison}, with {\color{blue}\MAP segmentation}. 
Event labels: 
(1) BNP Paribas funds frozen, 
(2) Fed cuts lending rate, 
(3) IKB 1bn\$ losses, 
(4) Northern Rock bank run, 
(5) Fed cuts interest rate, 
(6) Bush 
	rescue plan for \textgreater $10^6$ homeowners, 
(7) Fed, ECB, BoE loans for banks, 
(8) Fed cuts funds rate, 
(9) G7 estimate: 400bn\$ losses worldwide, 
(10) JP Morgan buys Bear Stearns, 
(11) IMF estimate: \textgreater 1trn\$ losses worldwide,
(12) HBOS' rights issue fails, 
(13) ECB provides \euro{}200bn for liquidity, 
(14) Fannie Mae \& Freddie Mac bailout, 
(15) Lehman collapse, 
(16) Russia: 500bn Roubles crisis package, 
(17) Fortis bailout, 
(18) UK: \textsterling500bn bank rescue package, 
(19) 
	BoE, ECB cut interest rate, 
(20) G20 promise fiscal stimuli, 
(21) Madoff's Ponzi scheme revealed, South Korean 
 CB sets interest rate at record low
(22) Fed, Japanese central bank cut interest rates. 
 Dates from \citet{timeline}.}
\label{30PF_FinCrisis}
\end{center}
\vskip -0.2in
\end{figure*}

\section{Experimental results}\label{results}

We evaluate the performance of the algorithm in two parts. First, we compare it to benchmark performances of \GP-based models on real world data reported by \citet{GPBOCD}.  
This shows that as implied by Thm. \ref{VAR_Thm}, \VARs are
excellent approximations for a large variety of data streams.
Next, we showcase \BOCPDMS' novelty in the  multivariate setting. 
All computations can be reproduced with code available on the first author's website.  We use uniform model priors $q$, a constant Hazard functions $H$ and gradient descent for hyperparameter optimization as in Section \ref{hyperpar_opt}. The lag lengths of models in $\mathcal{M}$ are chosen based on Thm. \ref{VAR_Thm} (3) and the rates of \citet{HannanKavalieris} for  \BVARs and Bayesian Autoregressions (\BARs), respectively.


\subsection{Comparison with \GP-based approaches}
As in \citet{GPBOCD}, \ARGPCP will refer to the non-linear \GP-based \AR model, \GPTSCP to the time-deterministic model, and \NSGP to the non-stationary \GP allowing hyper-parameters to change  at every \CP. 
\citet{GPBOCD} compute the mean squared error (\MSE) as well as the negative log predictive likelihood (\NLL) of the one-step-ahead predictions  
 for three data sets: The water height of the Nile between $622-1284$ AD, the snowfall in Whistler (Canada) over a 37 year period and the $3$-dimensional time series ($x$-, $y$-coordinate and headangle) of a honey bee during a waggle dance sequence. In \citet{TurnerThesis}, all of the models except \NSGP were also compared on daily returns for $30$ industry portfolios from $1975-2008$.
In Table \ref{benchmark_table}, \BOCPDMS is compared to these benchmarks  for $\mathcal{M}$ consisting of \BAR and \SSBVAR models.  

\begin{table}[ht]
{\renewcommand{\arraystretch}{1.2}
\caption{One-step-ahead predictive \MSE and \NLL of \BOCPDMS compared to \GP-based techniques, with $95\%$ error bars. All \GP results are taken from \citet{GPBOCD} and \citet{TurnerThesis}.}
\label{benchmark_table}
\begin{center}
\vskip 0.1in
\begin{small}
\begin{sc}
\begin{tabular}{ p{1.2cm} p{1.2cm} p{1.2cm} p{1.2cm} p{1.25cm}}
 \hline\hline
  \multicolumn{1}{c}{}& \multicolumn{2}{c}{Nile}& \multicolumn{2}{c}{Snowfall}\\
  \hline\hline
 Method     & \MSE 	& \NLL & \MSE 	& \NLL \\[1.5pt]
 \hline
 \ARGPCP   	& $0.553 $ 		& $1.15$		&  $0.750$   	& $-0.604$		\\
 			&  $(0.0962) $ 	& $(0.0555) $	&  $(0.0315)$   &  $(0.0385)$		\\
 \GPTSCP 	& $0.583$     	&  $1.19 $ 		& $0.689$    	 &  $1.17 $\\
 			& $(0.0989)$    &  $(0.0548) $ 	& $(0.0294)$     &  $(0.0183) $\\
 \NSGP	 	&  $0.585$   	& $1.15$ 		&  $\mathbf{0.618}$   	& 	$\mathbf{-1.98}$	\\
 			&  $(0.0988)$   & $(0.0655)$ &  $(0.0242)$   	& 	$(0.0561)$	\\
 \BVAR		&	$\mathbf{0.550}$  	&	$\mathbf{1.13}$		& $0.681$    & $0.923$		\\
 			&	$(0.0948)$  	&	$(0.0684)$		& $(0.0245)$    & $(0.0231)$		\\
 \hline\hline
  \multicolumn{1}{c}{}& \multicolumn{2}{c}{Bee Dance}& \multicolumn{2}{c}{30 Portfolios}\\
 \hline\hline
 Method     & \MSE 	& \NLL & \MSE 	& \NLL \\[1.5pt]
 \hline
 \ARGPCP   	& $2.62$ 	& $4.07$		&  $29.95$   & $39.55$		\\
 			& $(0.195)$ & $(0.150)$   	&  $(0.50)$	 & $(0.22)$ 	\\ 
 \GPTSCP 	& $3.13$    & $4.54$		&  $30.17$   & $\mathbf{39.44}$		\\
 			& $(0.241)$ & $(0.188)$		&  $(0.51)$  & $(0.22)$		\\	
 \NSGP	 	& $3.17$    & $4.19$		&  --   & 	--	\\
 			& $(0.230)$ & $(0.212)$		&  --   & 	--	\\
 \BVAR		&$\mathbf{1.74}$ & $\mathbf{3.57}$	& $\mathbf{25.93}$  
 			& $48.32$ 		\\
 			& $(0.222)$    & 	$(0.166) $	& $(0.906)$    & $(0.964)$		\\
 \hline\hline
\end{tabular}
\end{sc}
\end{small}
\end{center}}
\end{table}

\subsubsection{Designing $\mathcal{M}$}


{
Both the Nile and the snowfall data are univariate, so $\mathcal{M}$ consists of \BARs with varying lag lengths.
For the $3$-dimensional bee data, $\mathcal{M}$ additionally contains unrestricted \BVARs. 
Lastly, \SSBVARs  are used on the $30$ Portfolio data. Two neighbourhood systems are constructed from distances in the spaces of pairwise contemporaneous correlations and autocorrelations {prior} to $1975$, a third using the \textit{Standard Industrial Classification (\SIC)}, with $\Pi(\cdot)$ decreasing linearly. 
}

\subsubsection{Findings}

\textbf{Predictive performance and fit:} In terms of \MSE, \BOCPDMS clearly outperforms all \GP-models on multivariate data. Even on univariate data, the only exception to this is the snowfall data, where \NSGP does better. 
However, \NSGP 
requires grid search or Hamiltonian Monte Carlo sampling for hyperparameter optimization at each observation \cite{GPBOCD}.
Overall, there are three main reasons why \BOCPDMS performs better: Firstly, being able to change lag lengths between \CPs seems more important to predictive performance than being able to model non-linear dynamics. 
Secondly, unlike the \GP-models, we 
allow the time series to communicate via $\{\*A_l^L\}$. 
Thirdly, the hyperparameters of the \GP have a strong influence on inference. In particular, the noise variance $\sigma$ is treated as a hyperparameter and optimized via type-II \ML. Except for the \NSGP, this is only done during a training period. Thus, the \GP-models cannot adapt to the observations after training, leading to overconfident predictive distributions that are too narrow \citep[see][p. 172]{TurnerThesis}. This in turn leads them to be more sensitive to outliers, and to mislabel them as \CPs. 
In contrast, \eqref{BVAR_eq_1}--\eqref{BVAR_eq_4} models $\sigma$ as part of the inferential Bayesian hierarchy, and hyperparameter optimization is instead applied at one level higher.
Consequently, our predictive distributions are wider, and the algorithm is less confident about the next observations, making it more robust to outliers.
This is also responsible for the overall smaller standard errors of the \GP-models in Table \ref{benchmark_table}, since the \GPs interpret outliers as \CPs and immediately adapt to short-term highs or lows.

\textbf{\CP Detection:} A good demonstration of this finding is the Nile data set, where the \MAP segmentation finds a single \CP, corresponding to the installation of the nilometer around $715$ CE, see Fig \ref{Nile}. In contrast, \citet{GPBOCD} report $18$ additional \CPs corresponding to outliers. 
The same phenomenon is also reflected in the run-length distribution (\RLD): While the probability mass in Figs. \ref{30PF_comparison},  \ref{30PF_FinCrisis} and \ref{Nile} are spread across the retained run-lengths, the \RLD reported in \citet{GPBOCD} is more concentrated and even degenerate for the $30$ Portfolio data set.
On the other hand, such enhanced sensitivity to change can be advantageous. 
For instance, in the bee waggle dance, 
the \GP-based techniques are better at identifying the true \CPs. The reason is twofold: 
Firstly, the variance for the bee waggle data is homogeneous across time, so treating it as fixed helps inference. Secondly, the \CPs in this data set are subtle, so having narrower predictive distributions is of great help in detecting them.
However, it adversely affects performance when changes in the error variance are essential, as for financial data:
In particular, \BOCPDMS finds the ground truths  labelled in  \citet{GPBOCD}, and discovers even more, see Fig. \ref{30PF_comparison}. 
This is especially apparent in times of market turmoil where changes in the variance of returns are significant. We show this using the example of the subprime mortgage financial crisis: 
While the \RLD of \citet{GPBOCD} identified only $2$ \CPs with ground truth and a third unlabelled one during the height of the crisis, \BOCPDMS detects a large number of \CPs corresponding to ground truths, see Fig. \ref{30PF_FinCrisis}.

Lastly, we note that segmentations obtained off-line for both the bee waggle dance and the $30$ Portfolios are reported in \citet{MurphyMVTSBCP}. Compared to the on-line segmentations produced by \BOCPDMS, these are closer to the truth for the  bee waggle data, but not for the $30$ Portfolio data set.

%
\begin{figure}[b!]
\begin{center}
\centerline{\includegraphics[trim={1.08cm 1.08cm 2.0cm 1.5cm},clip, width=1.0\columnwidth]{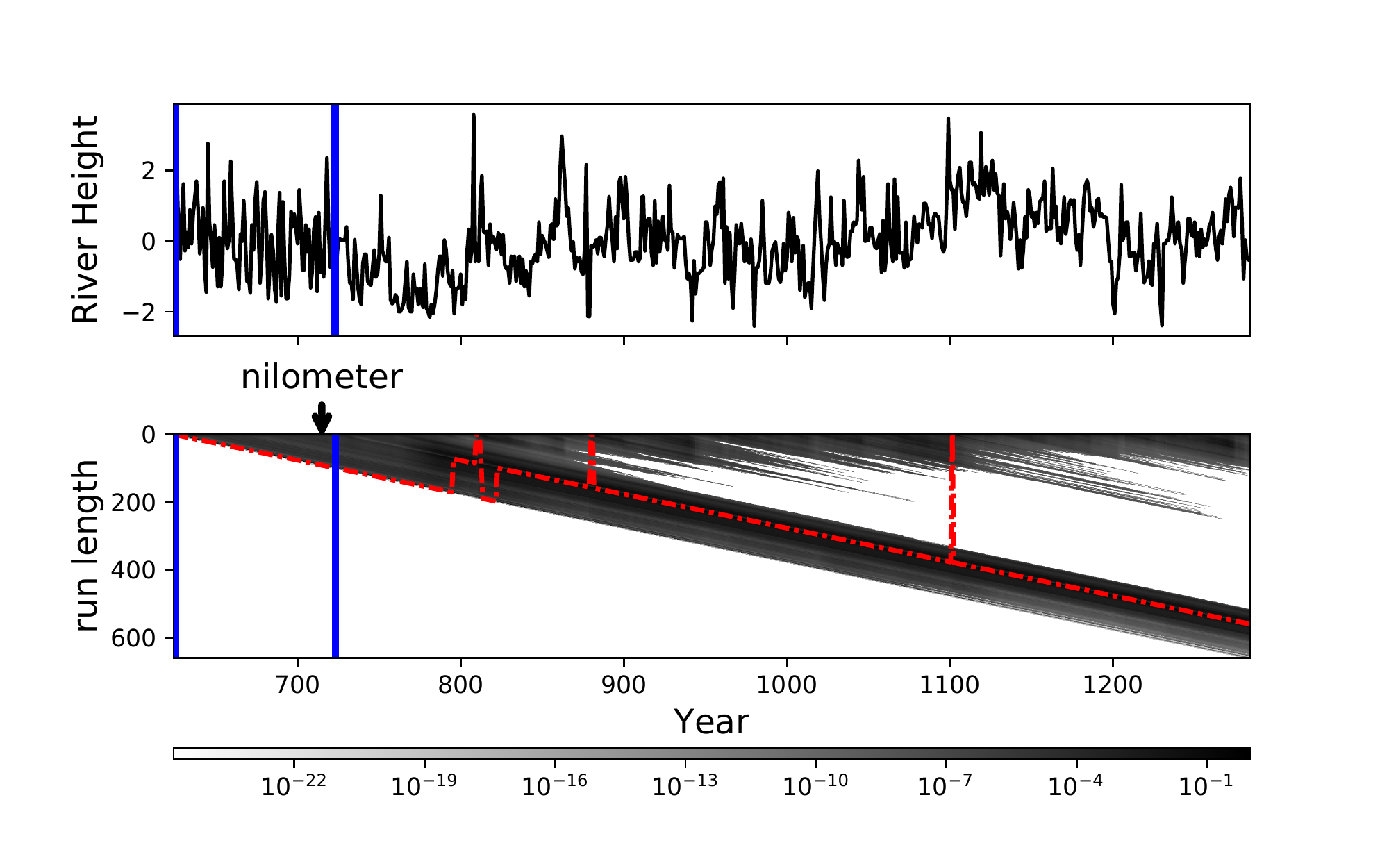}}
\caption{\textit{Results for Nile data}: {\color{darkgray}\textbf{Panel 1:}}  Nile data with structural change at $715$. {\color{darkgray}\textbf{Panel 2:}} Both run-length distribution (grayscale with dashed \textcolor{red}{maximum}) and  {\color{blue}\MAP segmentation} detect the change.}
\label{Nile}
\end{center}
\vskip -0.2in
\end{figure}

\textbf{Model selection:}
In most of the experiments where abrupt changes model the non-stationarity well, the model posterior is fairly concentrated and periods of model uncertainty 
are short. This is different when changes are slower, see Fig. \ref{Temperatures}.
%
%
The implicit model complexity penalization Bayesian model selection performs provides \BOCPDMS with an Occam's Razor mechanism:
Simple models are typically favoured until evidence for more complex dynamics accumulates.
For the bee waggle and the $30$ Portfolio data set, 
\BVARs are preferred to \BARs. For the $30$ Portfolio data, the \MAP segmentation only selects \SSBVARs with neighbourhoods 
constructed from contemporaneous correlation and autocorrelations. 
Neighbourhoods using \SIC codes are not selected, reflecting that this classification from $1937$ is out of date.
\begin{figure}[t!]
\vskip 0.05in
\begin{center}
\centerline{\includegraphics[trim={0.8cm 0.35cm 2.0cm 1.5cm},clip, width=1.0\columnwidth]{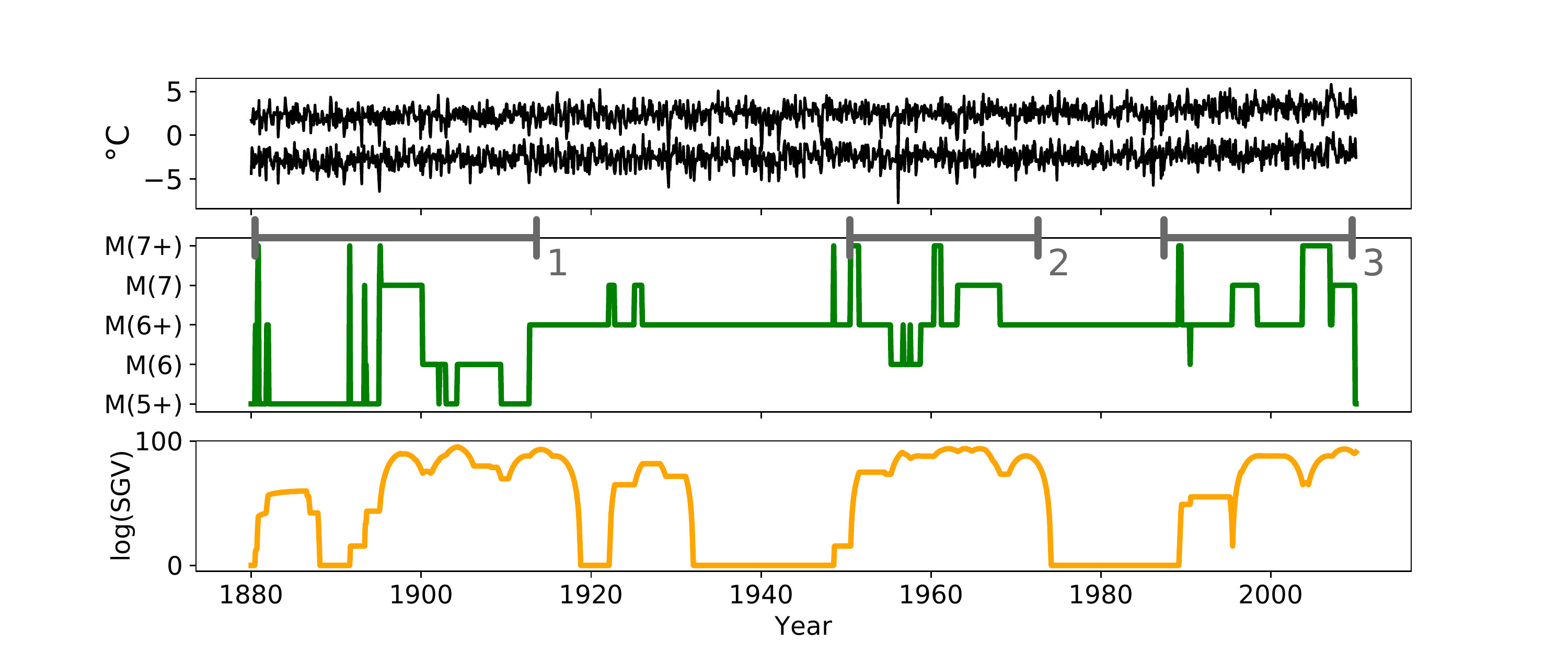}}
\caption{\textit{Results for European Temperatures}: {\color{darkgray}\textbf{Panel 1:}}  normalized temperature for Prague and Jena 
{\color{darkgray}\textbf{Panel 2:}} {\color{darkgreen} Model Posterior maximum, ${\widehat{m}_t = {\arg\max_{m_t\in\mathcal{M}}\{p(m_t|y_{1:t})\}}}$}, model complexity decreasing top to bottom. $M(l), M(l+)$ are \SSBVAR with $l$ lags. Spatial dependence in $M(l+)$ is slower decaying. Periods of model uncertainty are (1) $2$nd Industrial Revolution $1870-1914$, (2) Post WW2 boom $1950-1973$, (3) European Climate shift $1987-$present, see \citet{EUTemps}. {\color{darkgray}\textbf{Panel 3:}} To compare model uncertainty across different data and $\mathcal{M}$, the (Log) {\color{orange} Standardized Generalized Variance (\SGV)} of $\widehat{m}_t$ can be used. 
}
\label{Temperatures}
\end{center}
\vskip -0.2in
\end{figure}

\subsection{Performance on spatio-temporal data}


\textbf{European Temperature:}
Monthly temperature averages $01/01/1880-01/01/2010$ for the $21$ longest-running stations across Europe are taken from {http://www.ecad.eu/}. We adjust for seasonality by subtracting monthly averages for each station.
%
Station longitudes and latitudes are available, so $N(\mathcal{S})$ is based on  concentric rings around the stations using Euclidean distances. 
Two different decay functions $\Pi(\cdot), \Pi^{+}(\cdot)$ are used, with $\Pi^{+}(\cdot)$ using larger neighbourhoods and slower decaying.
Temperature changes are poorly modeled by \CPs and more likely to undergo slow transitions. Fig. \ref{Temperatures} shows the way in which the model posterior 
captures such longer periods of change in dynamics. 
The values on the bottom panel are calculated by considering $\widehat{m}_t = \arg\max_{m_t \in \mathcal{M}}p(m_t|\*y_{1:t})$ as $|\mathcal{M}|$-dimensional multinomial random variable. Its Standardized Generalized Variance (\SGV) \cite{GV, SGV} is calculated as $|\mathcal{M}|$-th root  of the covariance matrix determinant. We plot the log of the \SGV computed using the model posteriors for the last $8$ years. This provides an informative summary of the model posterior dispersion.


\textbf{Air Pollution:}
Finally, we analyze Nitrogen Oxide (\NOX) observed at $29$ locations across London $17/08/2002 - 17/08/2003$. The quarterhourly measurements are averaged over $24$ hours. Weekly seasonality is accounted for by subtracting week-day averages for each station.
$\mathcal{M}$ is populated with \SSBVAR models whose neighbourhoods are constructed from both road- and Euclidean distances. 
As $17/02/2003$ marks the introduction of London's first ever congestion charge, we find structural changes in the dynamics around that date. A model with shorter lag length but identical neighbourhood structure is preferred after the congestion charge. In Fig. \ref{AirPollution}, 
Bayes Factors (\BFs) capture the shift: \citet{BF} classify logs of \BFs  as very strong evidence if their absolute value exceeds $5$.



\section{Conclusion}\label{Discussion}

We have extended Bayesian On-line Changepoint Detection (\BOCPD) to multiple models  
by generalizing \citet{FearnheadOnlineBCD} and \citet{BOCD}, arriving at \BOCPDMS. 
%
For inference in multivariate data streams, we propose  \BVARs with closed form distributions that have strong theoretical guarantees summarized in Thm. \ref{VAR_Thm}. 
We sparsify \BVARs based on neighbourhood systems, thus making \BOCPDMS especially amenable to spatio-temporal inference.
To demonstrate the power of the resulting framework, we apply it to multivariate real world data, outperforming the state of the art. %
%
In future work, we would like to 
add and remove models from $\mathcal{M}$ on-line. 
This could lower the computational cost for the case where $|\mathcal{M}|$ is significantly larger than the number of threads.

\begin{figure}[h!]
\vskip 0.1in
\begin{center}
\centerline{\includegraphics[trim={0.5cm 0.8cm 1.0cm 1.5cm},clip, width=1.0\columnwidth]{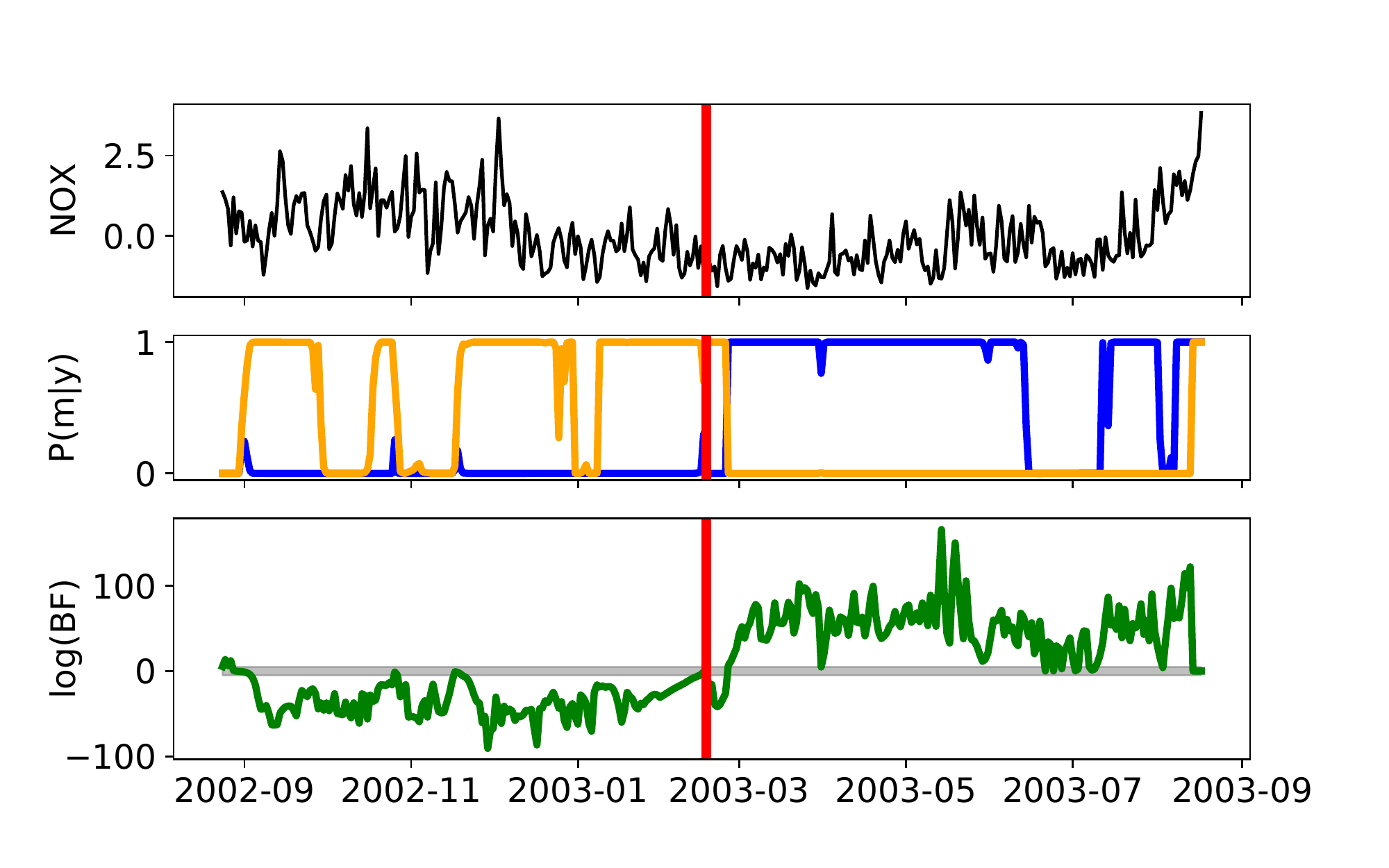}}
\caption{\textit{Results for Air Pollution}: {\color{darkgray}\textbf{Panel 1:}}  {\textbf{\NOX}} levels for Brent, with {\color{red}congestion charge introduction} date 
{\color{darkgray}\textbf{Panel 2:}}  Model posteriors for the two best-fitting models, with Euclidean neighbourhoods. {\color{darkgray}\textbf{Panel 3:}} Their log Bayes Factors, $[-5,5]$ \textcolor{gray}{\textbf{shaded}}. 
}
\label{AirPollution}
\end{center}
\vskip -0.2in
\end{figure}

%
%

\section*{Acknowledgements}


We want to thank N. Karampatziakis for his help with making the method computationally more efficient.
JK is funded by \EPSRC grant EP/L016710/1. 
Further, this work was supported by The Alan Turing Institute for Data Science and AI under \EPSRC grant EP/N510129/1 and the Lloyds Register Foundation programme on Data Centric Engineering.

\bibliography{library}
\bibliographystyle{icml2018}


\appendix
\addcontentsline{toc}{section}{Appendices}

\section{Condition of Theorem 1}

\textit{Denoting the spectrum of a matrix $\*B$ (i.e., the set of its eigenvalues) by $\sigma(\*B)$, the following condition is a restatement of the relevant part in condition \textbf{A} of Meyer \& Kreiss (2015)}:

\begin{condition}
 Let $\*W$ be the spectral density matrix of the purely non-deterministic stochastic process $\{\*Y_t\}_{t=1}^{\infty}$ satisfying the conditions of Theorem 1.  We assume that the spectral density matrix is bounded, i.e.\ there is a constant $c>0$ so that
 \begin{IEEEeqnarray}{rCl}
 	\min\left( \sigma(\*W(\lambda))  \right) & \geq & c
 \end{IEEEeqnarray}
 for all frequencies $\lambda \in (-\pi, \pi]$, i.e.\ the eigenvalues of the spectral density matrix are uniformly bounded
away from zero.
\end{condition}

\section{Empirical evaluation of computation time}

For this comparison, we use the original code of Turner (2012) for the \GP-models. As the \MSE is smallest for \ARGPCP for all data sets except for the snowfall data, we compare \BOCPDMS against the arguably best \GP \CP model. We note that while \NSGP performs better on the snowfall data than \ARGPCP, its requirement to do Hamiltonian Monte Carlo sampling will make it significantly slower. 
We also note that \BVAR models inside \BOCPDMS outperformed the \MSE of the \ARGPCP model for all data sets considered. All computations were performed on a  3.1 \GHz Intel \iseven{ }with 16\GB \RAM. 

Table \ref{benchmark_table} summarizes the results. It is clear that \BOCPDMS outperforms \ARGPCP computationally: e.g., the computation time per parameter is between $60$ (Nile data) and $585$ (Bee data) times faster for \BOCPDMS with \BVAR models. Computation times are faster per model, too. The only exception to this is the $30$ Portfolio data set, where the deployed \SSBVAR models are orders of magnitude more parameter-rich than the \ARGPCP-model.
Related to this, we also note that comparing the computation time per parameter makes sense for two reasons: Firstly, \BVARs model the $d$ time series jointly, thus requiring $d^2$ parameters in the posterior covariance matrix of $\*y_t$. In contrast, the \GP-models  ignore any dependence between the series, resulting in $d$ parameters of the (diagonal) posterior covariance matrix for $\*y$. Secondly, the parameters of the \GP's kernel arguably making its parameter space $\Theta$ infinite-dimensional are not actually learnt on-line at all. Instead, they are optimized for a training period of $T'$ observations and then fixed, see  section 4 in  the main paper. Hence, the parameter space the \GP-models can learn in is  finite-dimensional.

%





\begin{table}[h!]
{\renewcommand{\arraystretch}{1.2}
\caption{Computation time in seconds per model and per parameter in the space $\Theta = \cup_{m \in \mathcal{M}}\Theta_m$}
\label{benchmark_table}
\begin{center}
\vskip 0.1in
\begin{small}
\begin{sc}
\begin{tabular}{ p{1.2cm}  p{2.2cm} p{2,2cm}}
  \multicolumn{1}{c}{}& \multicolumn{2}{c}{Nile}\\
\hline
       	& Time/$|\mathcal{M}|$ & Time/$|\Theta|$ \\[1.5pt]
 \hline
 \ARGPCP    	& $42.2$		&  $21.0$   	\\
 \BVAR		  	&	$\mathbf{4.03}$		& $\mathbf{0.35}$    		\\
 \hline\hline
  \multicolumn{1}{c}{}& \multicolumn{2}{c}{Snowfall} \\
\hline
 & Time/$|\mathcal{M}|$ & Time/$|\Theta|$ \\[1.5pt]
 \hline
 \ARGPCP   	 	& $284$		&  $142$   \\
 \BVAR		 & $\mathbf{157}$	& $\mathbf{4.25}$  
 			\\
 \hline\hline
   \multicolumn{1}{c}{}& \multicolumn{2}{c}{Bee}\\
\hline
      	& Time/$|\mathcal{M}|$ & Time/$|\Theta|$ \\[1.5pt]
 \hline
 \ARGPCP   		& $164$		&  $23.4$   	\\
 \BVAR		 	&	$\mathbf{97.3}$		& $\mathbf{0.04}$    		\\
 \hline\hline
  \multicolumn{1}{c}{}& \multicolumn{2}{c}{$30$ Portfolios} \\
\hline
  	& Time/$|\mathcal{M}|$ & Time/$|\Theta|$ \\[1.5pt]
 \hline
 \ARGPCP  	& 	$\mathbf{12077}$	&  $403$   \\
 \BVAR	 &  ${34183}$	& $\mathbf{1.48}$  \\ 
\end{tabular}
\end{sc}
\end{small}
\end{center}}
\end{table}

\end{document}